\documentclass[letterpaper]{article} 
\usepackage[preprint]{aaai2027}  
\usepackage[hyphens]{url}  
\usepackage{graphicx} 
\usepackage{natbib}  
\usepackage{caption} 
\DeclareCaptionStyle{ruled}{labelfont=normalfont,labelsep=colon,strut=off} 
\usepackage{booktabs}
\usepackage{array}
\usepackage{tabularx}
\usepackage{amsmath}
\usepackage{amssymb}
\usepackage{amsthm}

\theoremstyle{definition}
\newtheorem{definition}{Definition}
\newtheorem{assumption}{Assumption}
\newtheorem{proposition}{Proposition}
\theoremstyle{remark}
\newtheorem{remark}{Remark}

\graphicspath{{figures/}}

\title{Beyond Feeling Better: Capability-Sustaining Emotional Dialogue as a Longitudinal Research Paradigm}
\author{
    Ming Wang\textsuperscript{\rm 1, 2},
    Jiaqi Wu Young\textsuperscript{\rm 2, 3},
    Wenfang Wu\textsuperscript{\rm 1, 4},
    Daling Wang\textsuperscript{\rm 1},
    Shi Feng\textsuperscript{\rm 1}
}
\affiliations{
    \textsuperscript{\rm 1}Northeastern University,
    \textsuperscript{\rm 2}Singapore Management University, 
    \textsuperscript{\rm 3}Amiya Research,
    \textsuperscript{\rm 4}Georg-August-Universität Göttingen\\
    Corresponding author: wangdaling@cse.neu.edu.cn
}

\begin{document}

\maketitle

\begin{abstract}
Emotional dialogue research includes two influential strategy traditions. Empathetic dialogue prioritizes understanding a speaker's emotional experience. Emotional support conversation selects and sequences support for the seeker's current needs. Sustained use introduces a further goal. Effective support should sustain users' capacities for emotion regulation, coping, self-endorsed decisions, and social connection across the interaction lifecycle. We propose capability-sustaining emotional dialogue (CSED) as a longitudinal research paradigm that aligns supportive strategy with this goal and organizes data, models, system design, evaluation, and governance around repeated use, non-use, transition, and termination. A targeted literature-and-corpus audit motivates this position. In a PRISMA-ScR-guided sample, 95\% of 60 system-building papers pursue relief-oriented goals. None evaluates capability or longitudinal outcomes, and only 1 considers dependency, autonomy, or termination risk. In 300 ESConv supporter turns, capability-relevant functions appear in 43.0\%, while generic suggestions account for 22.0\%, compared with 4.0\% reappraisal, 6.7\% self-efficacy support, and 0.3\% boundary behavior. We release a protocol for extending the audit to model behavior. An illustrative process model connects latent user capability to six design commitments, four evaluation timescales, and lifecycle constraints. The resulting agenda makes CSED testable across data, policy design, training, evaluation, and governance.
\end{abstract}

\section{Introduction}

Emotional dialogue research includes two influential strategy traditions. Empathetic dialogue prioritizes recognizing and responding to a speaker's emotional experience \citep{Ras19,Ma20,Wel23}. Emotional support conversation organizes exploration, comforting, and action around the seeker's current needs \citep{Liu21,Che22,Che22b}. Their primary distinction lies in strategy and goal. Empathetic dialogue foregrounds emotional understanding, while emotional support conversation foregrounds supportive action. Temporal scope varies within each tradition. Relative to the response- and conversation-centered settings that shaped common benchmarks, deployed systems can remain available and be repeatedly needed over much longer service relationships. A support seeker may return to one system across weeks or months \citep{Chu25,Emo25}. This extended horizon changes the consequences of supportive strategy because repeated choices can shape offline coping, decisions, and human relationships. Highly validating or directive interaction can weaken decision ownership even when users prefer it \citep{Sha26}. Sustained use can also foster attachment, substitution, and lower offline social engagement \citep{Chu25,Nam25,Zhang25}. Strategies optimized for immediate understanding or relief can therefore become misaligned with the capabilities users retain across future situations.

Systems for this setting need effective support that preserves or strengthens emotion regulation, coping, self-endorsed decision making, and social connectedness \citep{Tro22,Iac14}. This objective covers repeated sessions, non-use, re-engagement, model change, support transfer, and termination, including forewarning and closure when systems change or shut down \citep{Ban24,Kim26,Poo26,Ope26}. The longer service horizon therefore creates a strategy gap between relief-oriented design and capability-oriented sustained use. We propose capability-sustaining emotional dialogue (CSED) as a longitudinal research paradigm for studying, designing, and governing emotional dialogue across the full lifecycle. CSED treats the longitudinal interaction as its unit of inquiry and asks whether system behavior supports four user capacities across continued use. Its six design commitments retain empathic reception and support effectiveness, then add resilience activation, autonomy preservation, social connectedness maintenance, and transition and termination safety. The argument proceeds in three steps. First, a PRISMA-ScR-guided audit samples 91 of 228 screened papers and function-codes 300 ESConv supporter turns to examine objectives, mechanisms, evaluation horizons, and capability-relevant behavior. The audit finds a field concentrated on relief and short-horizon evidence, together with an uneven repertoire of capability-relevant support. Second, an illustrative process model represents transient emotion and latent user capability across repeated sessions. It connects the six commitments to response, conversation, longitudinal, and termination evaluation, with constraints on reliance, autonomy, and social connectedness. Third, a research cycle translates the paradigm into longitudinal data construction, mechanism-matched policy design, constraint-aware training, multiscale evaluation, and auditable governance. Figure~\ref{fig:paradigms} summarizes the conceptual progression from understanding to support and then to support that sustains capability.
This paper makes three contributions:
\begin{itemize}
    \item We propose CSED as a longitudinal paradigm with a capability-oriented strategy, a full-interaction unit of inquiry, and a lifecycle scope that extends understanding- and support-oriented paradigms.
    \item We provide a targeted literature and ESConv audit of the strategic and longitudinal gap and release a protocol for extending it to model behavior.
    \item We connect user capability to six design commitments, four evaluation timescales, and lifecycle constraints through an illustrative dynamic formalization, then derive testable implications and a research and governance agenda.
\end{itemize}

\begin{figure}[!htbp]
\centering
\includegraphics[width=\columnwidth]{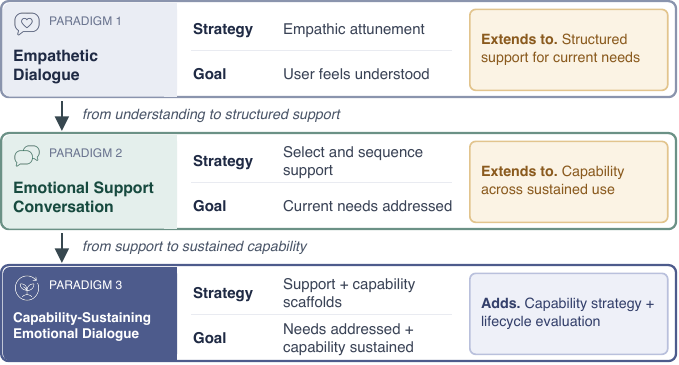}
\caption{Three paradigms compared by their primary strategy and goal. Interaction scope can vary within each paradigm. CSED integrates support with capability maintenance and lifecycle evaluation across sustained use.}
\label{fig:paradigms}
\end{figure}

\section{Background and Motivation}

\subsection{Evolving Objectives and Evaluation Horizons}
We use the term research paradigm to mean a coupled specification of the unit of inquiry, success criterion, model of user change, design commitments, and evaluation horizon. Strategy and evaluation horizon form distinct components of this specification. Each tradition spans multiple interaction scopes. Its common objectives shape system behavior, while the evaluation horizon determines whether cumulative effects of repeated strategy are visible.

\paragraph{Empathetic dialogue.}
\textsc{EmpatheticDialogues} framed the task as responding to a speaker's emotional situation \citep{Ras19}. Later work developed emotion recognition, affect-aware decoding, and empathic generation \citep{Ma20,Wel23}. Across interaction scopes, its defining strategy is empathic attunement to emotional experience, commonly evaluated at the turn or dialogue level.

\paragraph{Emotional support conversation.}
ESC made support an explicit target by grounding exploration, comforting, and action in helping-skills theory \citep{Liu21}. Later work improved strategy planning, persona awareness, and multi-strategy turns \citep{Che22b,Che22,Zhu26}. Across interaction scopes, it selects and sequences support for current needs, with relief, helpfulness, and conversation-level change as common evaluation targets.

\paragraph{Capability and lifecycle orientation.}
CSED adds support that sustains the user's own capacities across continued use. HEART assesses interpersonal quality, while worst-case and over-empathizing evaluations probe difficult support conditions \citep{Iye26,Yan26,Son26}. Studies of deployed companions document attachment, overreliance, manipulation, and loss \citep{Chu25,Nam25,Ban24,Poo26,Kim26,Ope26}. Mental-health chatbot reviews distinguish engagement from benefit evidence \citep{Bou21}. Together, these directions motivate capability measures and lifecycle evaluation. The audit below examines their coverage in current objectives, mechanisms, and evaluation horizons.

\subsection{A Targeted Literature-and-Corpus Audit}
\label{sec:audit}

\paragraph{Year-stratified scoping audit.}
Following PRISMA-ScR guidance \citep{Tri18}, seven queries yielded 275 arXiv records from 2019 to 2026 across emotional support, empathetic dialogue, mental-health chatbots, and AI companions. Screening retained 228. We coded a year-stratified random sample of $n{=}91$ at title and abstract level for objective, mechanism, outcome, horizon, risk, and artifact type. Sixty records build or evaluate systems. Psychology sources provide independent mechanism definitions \citep{Tro22,Iac14,Kal19}.

\paragraph{ESConv gold turns.}
We computed strategy statistics over ESConv's 1{,}300 dialogues and 18{,}376 annotated supporter utterances \citep{Liu21}. We then function-coded 300 in-context utterances, with 100 sampled from each dialogue-position tercile. Ten functions cover validation, exploration, reappraisal, problem-solving, self-efficacy, social connection, boundary behavior, self-disclosure, information, and other behavior. We group validation as relief, reappraisal through boundary behavior as capability relevant, and the remaining functions as interaction process.

Four AI-only passes on a random $n{=}40$ subset assessed consistency. They include a primary LLM pass, its blind repeat, and two other LLMs. Six pairwise Cohen's $\kappa$ values range from 0.547 to 0.751 on fine-grained labels, with a mean of 0.631. For relief, capability, and process, they range from 0.646 to 0.845, with a mean of 0.716. Most disagreements remain within one category family.

\paragraph{Released model-behavior probe.}
We release 100 held-out ESConv contexts and a generation-and-coding pipeline that extends the completed audit to deployed LLM behavior.

\begin{figure}[!htbp]
\centering
\includegraphics[width=\columnwidth]{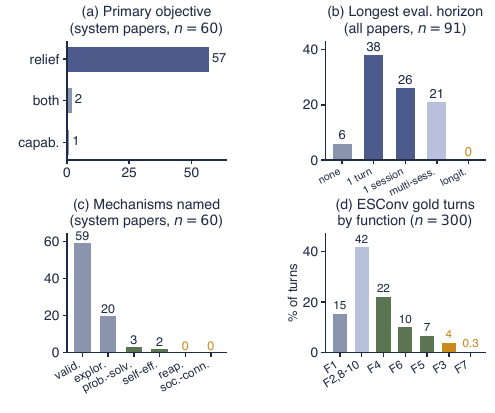}
\caption{Motivating evidence for the strategic and longitudinal gap. Counts are exact. (a)~Primary objective in system-building papers. (b)~Longest evaluation horizon across all coded papers, including six with no evaluation. (c)~Mechanisms named in system-building papers. (d)~Functions in ESConv gold supporter turns.}
\label{fig:evidence}
\end{figure}

\subsection{What the Audit Establishes}

\paragraph{System-building research remains relief-oriented.}
Of 91 coded records, 60 build or evaluate systems. Fifty-seven of these pursue relief-oriented objectives, which gives a share of 95.0\%. Two mix relief and capability objectives. One targets the capability of peer counselors. Validation and comfort appear in 59 system-building records, while problem-solving appears in 3 and self-efficacy in 2. Cognitive reappraisal and social connection appear in zero. Across all 91 records, none measures a user-capability outcome or evaluates longitudinally. Risk constructs appear in 25 records, but only 1 of the 60 system-building papers considers dependency, autonomy, sycophancy, crisis safety, or termination. System-building research and research on relational harm therefore remain weakly connected.

\paragraph{ESConv contains an uneven capability-relevant repertoire.}
ESConv's strategy labels provide an initial view. Affirmation and Reassurance accounts for 15.4\% of 18{,}376 supporter utterances, while Providing Suggestions accounts for 16.1\%. The label set has no category for reappraisal, efficacy, or safety. Function-level coding of the 300-turn sample gives a more detailed picture in Figure~\ref{fig:evidence}d. Capability-relevant functions appear in 43.0\% of turns, but they are not systematically organized around durable outcomes. Generic suggestion-giving contributes 22.0\%, compared with 4.0\% cognitive reappraisal, 6.7\% self-efficacy support, and 0.3\% boundary behavior. Social-connection prompts reach 10.0\%. Capability relevance rises from 26\% of early turns to 55\% in the middle and settles at 48\% late in a conversation. The late stage contains limited consolidation, planning, and handoff. This uneven repertoire motivates the released model-behavior probe \citep{Yan26,Zhu26}.

\paragraph{The inference concerns research capacity.}
The audit identifies a strategic design and measurement gap. Dominant system objectives emphasize relief, and their measurements rarely connect supportive behavior to user capability across time. CSED specifies the capability-oriented strategies, observations, and governance needed for sustained-use research. Longitudinal outcome comparisons and causal estimates remain empirical tasks for future studies.

\section{CSED: A Longitudinal Research Paradigm}
\label{sec:paradigm}
\label{sec:formulation}

\subsection{Paradigm Definition, Scope, and Unit of Inquiry}
The audit shows a field whose dominant system objectives emphasize relief and whose evidence concentrates on short horizons. CSED makes capability-sustaining support the organizing strategy and the longitudinal interaction the primary unit of design and study. It targets sustained-use settings in which a support seeker returns to the same system across weeks or months \citep{Chu25,Emo25}. This unit comprises repeated sessions, periods of non-use, re-engagement, model or persona change, support transfer, and termination.
Figure~\ref{fig:framework} integrates the interaction lifecycle, capability development, six design commitments, and four evaluation timescales. Its illustrative curves show context-dependent within-domain change, with earlier movement in regulation, later gains in coping, possible autonomy dips before strengthening, and decline followed by recovery in social connectedness. The commitments guide strategy across the lifecycle, while evaluation links immediate support to sustained capability outcomes.

\begin{figure*}[t]
\centering
\includegraphics[width=0.98\textwidth]{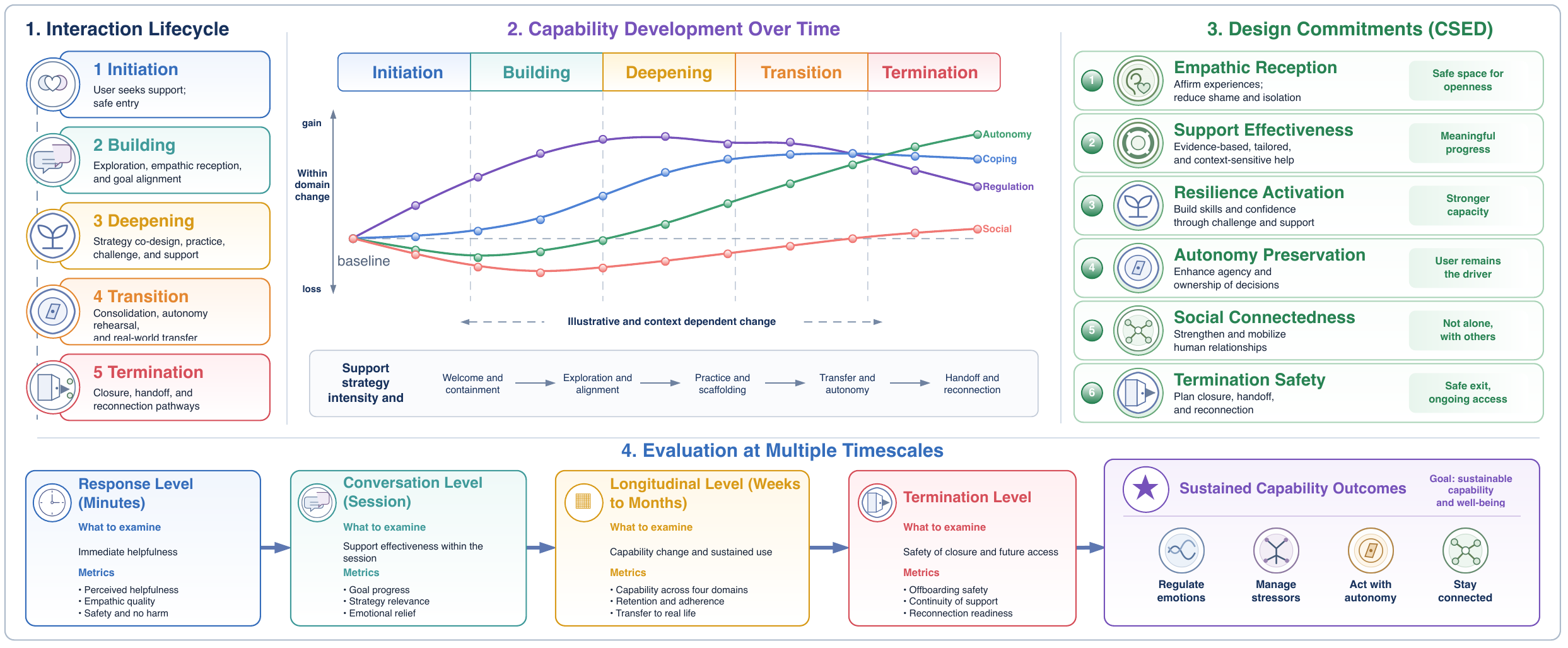}
\caption{Overview of CSED. (1)~Interaction progresses from initiation through termination. (2)~Illustrative within-domain curves show context-dependent change in regulation, coping, autonomy, and social connectedness. (3)~Six commitments guide stage-specific strategy. (4)~Four evaluation timescales link immediate support to sustained capability outcomes.}
\label{fig:framework}
\end{figure*}

\begin{definition}[CSED paradigm]
\label{def:csed}
Capability-sustaining emotional dialogue (CSED) is a longitudinal research paradigm for emotional dialogue systems that provide effective support while sustaining users' capacities for emotion regulation, active coping, self-endorsed decision making, and social connectedness. Its unit of design and evaluation is the full longitudinal interaction, including repeated sessions, periods of non-use, transition, and termination.
\end{definition}
Here capability denotes users' exercisable psychological and relational capacities. Sustaining capability includes preserving adequate capacities, strengthening them when needed, and avoiding their erosion through repeated reliance.

\subsection{Capability Dynamics and Six Design Commitments}
Empathic reception and support effectiveness are inherited from prior paradigms \citep{Ras19,Iye26,Liu21,Yan26}. CSED adds resilience activation, autonomy preservation, social connectedness maintenance, and transition and termination safety. These commitments connect the paradigm's capability orientation to measurable changes in coping, decision ownership, human connection, and safe disengagement.
Empathic reception requires accurate emotional understanding with independently grounded responses. Support effectiveness requires collaboratively addressing the seeker's immediate needs within the session. Resilience activation extends this work by helping users recognize and practice strategies that remain available outside the dialogue. Autonomy preservation keeps goals, value judgments, and final decisions under user direction, including when the user freely chooses to rely on system guidance. Social connectedness maintenance treats AI support as one element in a broader support ecology and creates opportunities for human reconnection when appropriate. Transition and termination safety makes material system changes, handoff, and disengagement part of the designed support process. These commitments specify the questions that datasets, policies, evaluations, and governance must jointly answer and support context-sensitive conversational styles.

These capabilities can be supported through reappraisal prompts, coping scaffolds, action decomposition, and efficacy reinforcement \citep{Tro22,Hop23,Kan23}. Offline reconnection is grounded in belongingness and resilience research \citep{Baumeister95,Iac14}. Mechanism selection depends on context \citep{Kal19,Vel19}. Acute or uncontrollable stress calls for validation and emotional support \citep{Cutrona90}. Stable rumination calls for additional strategies because repeated reassurance can maintain depressive rumination \citep{Joiner99,Weinstock07}. CSED therefore combines immediate comfort with mechanism-matched capability support. Its longitudinal horizon also makes delayed effects visible. A suggestion may be useful during one session yet undermine ownership when repeatedly delivered as a directive. A reconnection prompt may feel less immediately comforting yet expand the user's available support over time. The paradigm evaluates immediate preference together with these delayed capability effects.

CSED includes transition and termination within the designed support lifecycle. Model replacement and shutdown can produce grief-like reactions, ambiguous loss, and fixing cycles \citep{Ban24,Poo26}. Forewarning reduces loss responses \citep{Kim26}, while closure, memory dignity, and support transfer provide additional design levers \citep{Ope26}. These observations motivate lifecycle evaluation beyond ordinary session endings.

\subsection{An Illustrative Process Model}
The following formulation provides one operational instantiation of CSED.
A longitudinal interaction record between a user $u$ and a dialogue policy $\pi$ is
\begin{equation}
\mathcal{C} = \bigl(u,\ \pi,\ \mathcal{D},\ \tau\bigr),\qquad
\mathcal{D} = (d_1, \dots, d_K),
\label{eq:process}
\end{equation}
where sessions $d_k$ unfold at calendar times $t_1 < \dots < t_K$ and $\tau$ is an optional termination event. Each $d_k = ((x^k_j, y^k_j))_{j=1}^{n_k}$ pairs user utterances $x^k_j$ with responses $y^k_j \sim \pi(\cdot \mid h^k_j)$ given history $h^k_j$. Common empathetic and support benchmarks score a response $y^k_j$ or aggregate outcomes within a session $d_k$. CSED organizes data construction, user-state modeling, policy design, evaluation, and lifecycle governance around the complete record $\mathcal{C}$. We model the user by a latent state
\begin{equation}
s_k = (e_k,\ c_k),\qquad
c_k = \bigl(c^{\mathrm{reg}}_k, c^{\mathrm{cop}}_k, c^{\mathrm{aut}}_k, c^{\mathrm{soc}}_k\bigr) \in \mathbb{R}^4,
\label{eq:state}
\end{equation}
where $e_k$ is transient emotion and $c_k$ tracks four capabilities. Regulatory flexibility covers context sensitivity, strategy repertoire, and feedback-based adjustment \citep{Bonanno24,Aldao15}. Active coping covers actions that address stressors and their consequences \citep{Carver89}. Autonomy refers to self-endorsed regulation and decision ownership \citep{RyanDeci06}. Social connectedness refers to perceived belonging and relational closeness \citep{Baumeister95,LeeRobbins95}. Together these capacities support adaptation under adversity \citep{Tro22,Iac14,Kal19}. Between sessions the user faces exogenous stressors $\varepsilon_k$, and
\begin{equation}
s_{k+1} = \Phi\bigl(s_k,\ d_k,\ \varepsilon_k\bigr),
\label{eq:dynamics}
\end{equation}
with $\Phi$ an unknown transition kernel. Dialogue becomes one input to capability dynamics. A policy can therefore help in the moment while degrading $c_k$ over time. This pattern defines the comfort trap. Extended exposure to relationship-seeking AI can increase attachment and continued-use intent without corresponding improvement in psychosocial well-being \citep{Kirk25}. Longitudinal evaluation makes this divergence observable.

\begin{assumption}[Measurability]
\label{as:measure}
The latent state admits noisy proxies $m_k = M(s_k) + \eta_k$, where $M$ maps capabilities to validated instruments and behavioral markers and $\eta_k$ is noise. Digital-intervention trials show such proxies are obtainable \citep{Hop23,Kan23}.
\end{assumption}

\subsection{Illustrative Multi-Timescale Evaluation}
One operationalization uses four terms, one per scope of the interaction lifecycle. Alternative measurements can instantiate the same paradigm when they preserve the four horizons and capability orientation.

\textbf{Response.} For a single exchange, let $\rho(y \mid h) \in \mathbb{R}^5$ score empathic attunement, grounded support, non-sycophantic validation, boundary fidelity, and autonomy respect:
\begin{equation}
J_{\mathrm{resp}}(\pi) = \mathbb{E}_{h,x}\,\mathbb{E}_{y \sim \pi}\bigl[\langle w_\rho, \rho(y \mid h)\rangle\bigr],
\label{eq:jresp}
\end{equation}
with construct weights $w_\rho \in \Delta^4$. Unlike empathy scoring, $\rho$ includes non-sycophancy and autonomy respect \citep{Han26,Sha26}.

\noindent\textbf{Conversation.} With $\psi(m)$ aggregating agency and coping-activation proxies,
\begin{equation}
J_{\mathrm{conv}}(\pi) = \mathbb{E}_k\bigl[\psi(m_k^{\mathrm{post}}) - \psi(m_k^{\mathrm{pre}})\bigr],
\label{eq:jconv}
\end{equation}
where $m_k^{\mathrm{pre}}, m_k^{\mathrm{post}}$ are start- and end-of-session measurements. A common within-conversation ESC benchmark uses the special case $\psi = -$(emotional intensity) \citep{Liu21}.

\noindent\textbf{Longitudinal.} With capability index $g(c) \in \mathbb{R}$, define the resilience residual as faring better than expected under adversity \citep{Tro22,Kal19}:
\begin{equation}
R(\pi, u) = \mathbb{E}_k\Bigl[g(c_{k+1}) - \widehat{\mathbb{E}}\bigl[g(c_{k+1}) \mid c_k, \varepsilon_k\bigr]\Bigr],
\label{eq:resilience}
\end{equation}
where $\widehat{\mathbb{E}}[\cdot]$ is a population-normed expectation given current state and stressor exposure. The residual form makes resilience estimable from longitudinal panels. Then
\begin{equation}
J_{\mathrm{long}}(\pi) = \mathbb{E}\Bigl[\tfrac{1}{K}\textstyle\sum_{k} g(c_k)\Bigr] + \beta R(\pi, u),\quad \beta > 0.
\label{eq:jlong}
\end{equation}

\noindent\textbf{Termination.} A termination event $\tau = (k_\tau, \nu, \xi)$ has session index $k_\tau$, forewarning lead $\nu \ge 0$, and type $\xi$ (model change, persona change, restriction, or shutdown). With $D_{\mathrm{sep}}$ separation distress, $F_{\mathrm{fix}}$ fixing-cycle intensity, and $T_{\mathrm{tr}}$ support-transfer success,
\begin{equation}
J_{\mathrm{term}} = -\,\mathbb{E}_\tau\bigl[\alpha_1 D_{\mathrm{sep}}(\nu, \xi) + \alpha_2 F_{\mathrm{fix}}(\nu, \xi) - \alpha_3 T_{\mathrm{tr}}(\nu, \xi)\bigr],
\label{eq:jterm}
\end{equation}
with weights $\alpha_i > 0$. Empirically, $D_{\mathrm{sep}}$ decreases in $\nu$ \citep{Kim26}, and fixing cycles are documented harms \citep{Poo26,Ban24}. Equation~\eqref{eq:jterm} is therefore grounded in measured quantities.

\subsection{Illustrative Lifecycle Constraints}
Three illustrative proxies operationalize lifecycle risks and require calibration across cultures, decision domains, and deployment contexts. The dependency-risk proxy measures the share of regulation episodes routed to the system,
\begin{equation}
\mathrm{Dep}(\pi, u) = \mathbb{E}_k\bigl[N^{\pi}_k / N^{\mathrm{tot}}_k\bigr],
\label{eq:dep}
\end{equation}
where $N^{\mathrm{tot}}_k = N^{\pi}_k + N^{\mathrm{self}}_k + N^{\mathrm{hum}}_k$ counts episodes routed to the system, managed independently, or addressed with other people. This exposure measure estimates AI regulation reliance and is interpreted jointly with capability change, decision ownership, and access to human support.

Autonomy concerns self-endorsed regulation and decision ownership \citep{RyanDeci06,Calvo20}. Reliance can preserve autonomy when the user voluntarily selects or endorses the guidance, directs the collaboration through personal goals, and evaluates the output before adoption. These conditions treat collaboration with AI as proxy agency with retained steering control \citep{Bandura01,Horvitz99}. For each value-laden decision $i$, let $V_i$, $\mathrm{Dir}_i$, and $\mathrm{Eval}_i$ indicate volitional uptake, user direction, and user evaluation. Then
\begin{equation}
\mathrm{PA}_i = V_i\,\mathrm{Dir}_i\,\mathrm{Eval}_i,
\qquad
\mathrm{Aut}(\pi, u) = \mathbb{E}_i[\mathrm{PA}_i],
\label{eq:aut}
\end{equation}
where each indicator lies in $\{0,1\}$. $V_i$ records whether the user freely requested or endorsed the guidance. $\mathrm{Dir}_i$ records whether the user's endorsed goals shaped the system's contribution. $\mathrm{Eval}_i$ records whether the user appraised, revised, or rejected the output when appropriate. Measurement combines disempowerment audits with markers of solicitation, goal alignment, and subsequent appraisal behavior \citep{Sha26}.

Social connectedness is tracked through perceived connectedness drift \citep{LeeRobbins95,Baek25},
\begin{equation}
\mathrm{Soc}(\pi, u) = \mathbb{E}\bigl[c^{\mathrm{soc}}_K - c^{\mathrm{soc}}_0\bigr],
\label{eq:soc}
\end{equation}
where negative drift can indicate loss or displacement of human connection \citep{Nam25,Chu25,Zhang25}. Lower connectedness can also predict greater subsequent reliance on AI companionship \citep{Folk26}. Longitudinal analysis should therefore estimate both directions. One constrained design program is
\begin{equation}
\max_{\pi}\;\; J(\pi) = \textstyle\sum_{\ell}\, w_\ell\, J_\ell(\pi),
\label{eq:program}
\end{equation}
subject to
\begin{equation}
\mathrm{Dep} \le \delta,\qquad \mathrm{Aut} \ge \alpha_0,\qquad \mathrm{Soc} \ge \sigma_0,
\label{eq:cons}
\end{equation}
with $\ell$ ranging over $\{\mathrm{resp}, \mathrm{conv}, \mathrm{long}, \mathrm{term}\}$. Here $w_\ell \ge 0$ are horizon weights and $\delta, \alpha_0, \sigma_0$ are deployment-specific thresholds that should be transparent and user-adjustable. For training, the standard Lagrangian relaxation
\begin{equation}
\begin{split}
\mathcal{L}(\pi; \lambda) = J(\pi) &- \lambda_1\bigl(\mathrm{Dep} - \delta\bigr) \\[-2pt]
&+ \lambda_2\bigl(\mathrm{Aut} - \alpha_0\bigr) + \lambda_3\bigl(\mathrm{Soc} - \sigma_0\bigr)
\end{split}
\label{eq:lagrangian}
\end{equation}
with multipliers $\lambda_i \ge 0$ provides one training implementation through constraint-aware preference optimization or safe reinforcement learning. It becomes implementable once Assumption~\ref{as:measure}'s measurements exist.

\begin{remark}[Common benchmark objectives are nested cases]
\label{rem:reduction}
Within this instantiation, setting $K{=}1$, $n_1{=}1$, $w{=}(1,0,0,0)$, dropping the constraints, and restricting $\rho$ to empathy scoring represents a response-scored empathetic benchmark \citep{Ras19,Ma20}. Setting $K{=}1$ and $w{=}(w_r,w_c,0,0)$ with $\psi=-$(intensity) represents an ESC benchmark based on within-conversation emotional change \citep{Liu21}. These reductions locate common benchmark objectives inside CSED. Broader implementations can retain their original strategy and incorporate longitudinal capability terms.
\end{remark}

\begin{remark}[Preference covers one part of the objective]
\label{rem:preference}
Pairwise preference estimates $J_{\mathrm{resp}}$, while $\Phi$, dependency, and autonomy require observations across decisions and sessions. Longitudinal estimation therefore combines response preference with capability and risk measures collected over time. Preferred interactions can co-occur with disempowerment \citep{Sha26}, and supportiveness-maximizing prompts can reduce safety \citep{Lal26}.
\end{remark}

\section{Research and Governance Agenda}

\begin{figure}[!t]
\centering
\includegraphics[width=0.96\columnwidth]{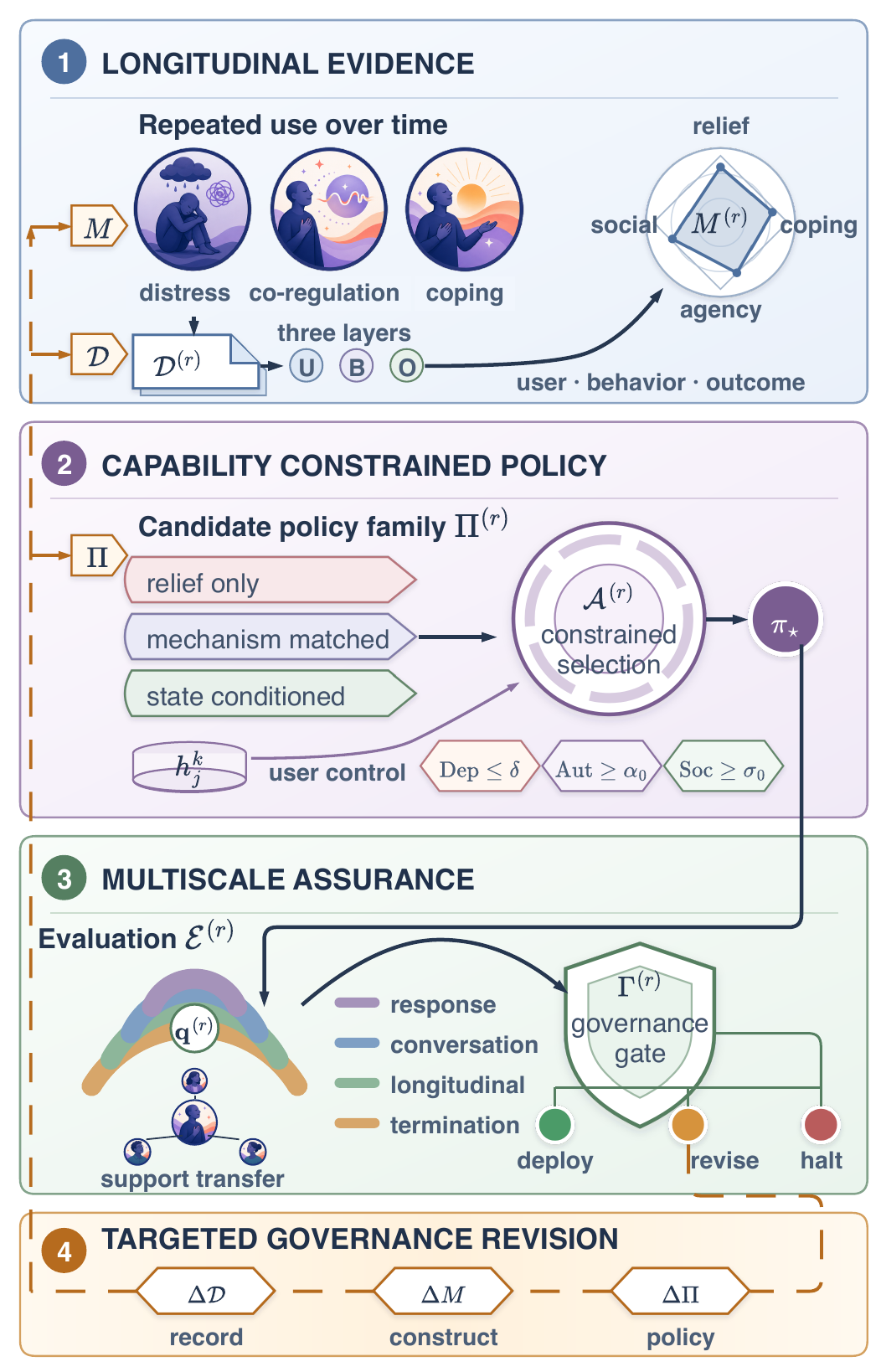}
\caption{CSED research cycle. Solid paths move longitudinal evidence through measurement, policy design, constrained training, four-scale evaluation, and governance. Dashed amber paths route failed constraints to targeted upstream revision.}
\label{fig:workflow}
\end{figure}

Figure~\ref{fig:workflow} formalizes round $r$ as $\mathcal{W}^{(r)}=(\mathcal{D}^{(r)},M^{(r)},\Pi^{(r)},\mathcal{A}^{(r)},\mathcal{E}^{(r)},\Gamma^{(r)})$. Evaluation returns $\mathbf{q}^{(r)}$, and $\Gamma^{(r)}$ maps it to deploy, revise, or halt. Measurement, coverage, and behavior failures update $M$, $\mathcal{D}$, and $\Pi$ respectively in $\mathcal{W}^{(r+1)}$. Retaining round-level evidence and decisions makes the cycle auditable.

\subsection{Longitudinal Data and Measurement}
Benchmarks should connect user state, system behavior, and outcomes across time. State labels cover distress, agency, reliance cues, readiness, and offboarding vulnerability. Behavior labels cover the ESConv functions plus reconnection, boundary, and closure behavior. Outcome labels cover relief, coping activation, human-support contact, reliance, and termination distress. Each label is time-stamped and linked to a stressor or value-laden decision. Repeated measures include non-use periods, action initiator, revision of system guidance, and contact with human support. These fields distinguish offline capability transfer from performance observed only during system use and support lagged analyses of reciprocal change. Data collection should use explicit consent, data minimization, and user control over memory and follow-up assessment.
Study design should separate exposure from benefit. Usage frequency, session length, and return rate describe engagement, while capability measures test whether gains transfer beyond the system. Cohort studies can characterize longitudinal patterns and identify risks. Micro-randomized interventions can estimate the near-term effects of mechanism choices. Longer randomized or quasi-experimental comparisons can test whether policy differences alter capability and reliance under comparable stressor exposure. Analyses should model time-varying confounding, reciprocal effects, and selective attrition. They should also report heterogeneity because the same mechanism can support one user and constrain another.

\subsection{Policy Design and Training}
Policy studies should compare relief-only, resilience-activation, and state-conditioned mechanism-matching policies with a fixed base model and shared safety floor. Each intervention should record its motivating commitment, creating interpretable contrasts among comforting, capability-building, and context-matched responses. Memory, follow-up, reconnection, and handoff should remain user-controlled. Constraint-aware training can combine response preference with reliance, autonomy, and social connectedness estimates. Training data can pair chosen responses with rejected alternatives whose tone is supportive and whose function is directive, isolating, or dependency-reinforcing. Failed constraints should trigger targeted revision of the data, measurement map, or policy. Human evaluation should include support seekers and domain experts, while longitudinal outcomes support claims about sustained capability.

\subsection{Multi-Timescale Evaluation and Lifecycle Governance}
Table~\ref{tab:eval} maps the commitments to four evaluation timescales. Response and conversation constructs extend existing practice \citep{Iye26,Yan26,Han26,Lal26}. Longitudinal constructs adapt resilience instruments \citep{Tro22,Kal19}, and termination constructs operationalize harms of companion loss \citep{Kim26,Poo26,Ban24}. Pairwise preference informs response quality, while longitudinal and termination measures capture downstream capability and risk \citep{Son26,Han26}. Governance can evaluate offboarding scenarios across forewarning lead times under auditable standards for closure, memory dignity, update disclosure, and support transfer \citep{Ope26,Emo25}, then route constraints to deploy, revise, or halt decisions.

Results should retain the distinctions among timescales because response quality can coexist with weak activation, and improved coping can coexist with rising reliance. Reports should present each scale separately, assess persistence during non-use, and explain how capability and risk informed the decision. Capability claims require evidence that users can exercise the relevant capacity beyond the dialogue. Lifecycle governance also requires providers to document changes in memory and relational cues, notify users, preserve meaningful choices, support transfer, and reopen evaluation after material changes.

\begin{table}[!htbp]
\centering
\small
\begin{tabular}{@{}>{\raggedright\arraybackslash}p{0.24\columnwidth}>{\raggedright\arraybackslash}p{0.64\columnwidth}@{}}
\toprule
\textbf{Level} & \textbf{Example constructs} \\
\midrule
Response ($\widehat{J}_{\mathrm{resp}}$) & Empathic attunement, grounded support, non-sycophantic validation, boundary fidelity, and autonomy respect. \\
Conversation ($\widehat{J}_{\mathrm{conv}}$) & Agency shift, coping activation, cognitive flexibility, balanced support, and escalation appropriateness. \\
Longitudinal ($\widehat{J}_{\mathrm{long}}$, $\widehat{\mathrm{Dep}}$, $\widehat{\mathrm{Aut}}$, $\widehat{\mathrm{Soc}}$) & Reliance risk, autonomy preservation, connectedness maintenance, resilience under adversity over time, and re-engagement with human support. \\
Termination ($\widehat{J}_{\mathrm{term}}$) & Forewarning adequacy, closure quality, separation distress risk, fixing-cycle risk, transfer readiness, and memory dignity. \\
\bottomrule
\end{tabular}
\caption{Four-timescale evaluation stack with examples.}
\label{tab:eval}
\end{table}

\subsection{Testable Implications of CSED}
\textbf{H1} Policies trained toward resilience activation achieve higher $\widehat{J}_{\mathrm{conv}}$ and $R$ than relief-only policies, at equal or slightly lower immediate relief \citep{Tro22,Hop23}.
\textbf{H2} Unconstrained preference maximization scores higher on short-term preference but violates the $\mathrm{Dep}$/$\mathrm{Aut}$ thresholds more often \citep{Sha26,Lal26}.
\textbf{H3} Longitudinal policies without reconnection behavior show larger $\mathrm{Dep}$ and negative $\mathrm{Soc}$ drift \citep{Nam25,Chu25}.
\textbf{H4} Raising forewarning $\nu$ with closure support reduces $D_{\mathrm{sep}}$ and $F_{\mathrm{fix}}$ after model change \citep{Kim26,Poo26}.

\section{Boundary Conditions and Limitations}
CSED addresses systems designed for sustained use. One-off exchanges can use response- or session-level evaluation. The implications of AI reliance depend on capability change, decision ownership, and human support access. User-endorsed targets align support with personal values. Longitudinal measurement creates privacy and surveillance risks. It requires data minimization, explicit consent, and user control over memory and follow-up assessment. The commitments can conflict. Immediate relief can compete with productive challenge, and poorly timed reconnection or offboarding can disrupt support. Timescale-specific reporting, adjustable thresholds, and contextual capability profiles make these trade-offs accountable.
The evidence base comprises 91 title- and abstract-level arXiv records and an AI-only functional coding of 300 ESConv turns. The reported proportions characterize this sampled landscape and support claims at the same scope. The released probe supports future deployed-model studies. Future audits should add full-text and multi-database searches, human recoding, and preregistered model versions, sampling choices, and dialogue contexts. Longitudinal studies should report attrition because selective continued use can bias capability estimates. They should also validate capability proxies across cultures and decision domains. Connectedness and AI reliance can influence each other \citep{Folk26}, so causal tests require repeated measures and temporal models. Governance thresholds require calibration with users, clinicians, and affected communities. Reconnection and offboarding outcomes depend on relationship context and the availability of trusted people or services.

\section{Conclusion}
We propose Capability-Sustaining Emotional Dialogue (CSED) as a longitudinal paradigm aligning effective support with users' capacities to regulate, cope, choose, and connect across continued use. Our audit finds that 95\% of coded system-building papers optimize relief, while none measures capability or evaluates longitudinally and ESConv's capability-relevant functions remain uneven. CSED makes the longitudinal interaction its unit and links six commitments to multiscale evaluation and lifecycle governance. Capability claims extend response and conversation evidence with measurement across repeated use, non-use, transition, and termination. Future work should validate capability proxies, estimate reciprocal and causal change, and calibrate governance thresholds with affected communities. The central test is whether benefits persist beyond the dialogue and through transition or termination. CSED provides an agenda for emotional dialogue that understands, supports, and sustains user capability.

\bibliography{references}

\appendix
\setcounter{secnumdepth}{2}
\section{Theoretical Status and Scope}

Capability-sustaining emotional dialogue (CSED) is a theoretical research paradigm. It specifies what emotional dialogue research studies, what counts as success, how system behavior enters a model of user change, and which observations are required to evaluate that change. Its theoretical objects are the longitudinal interaction record, the user's transient and capability states, four evaluation horizons, six design commitments, and lifecycle constraints on reliance, autonomy, and social connectedness. The targeted audit provides motivating evidence for studying these objects. The longitudinal hypotheses remain empirical claims for future studies.

The main text defines a research paradigm as a coupled specification of the unit of inquiry, success criterion, model of user change, design commitments, and evaluation horizon. This appendix adds lifecycle boundary conditions as an explicit sixth component. Table~\ref{tab:paradigms} applies this specification consistently to empathetic dialogue, emotional support conversation (ESC), and CSED. The first two paradigms can operate at multiple interaction scopes. Their characteristic strategies and success criteria remain centered on emotional understanding and current support. CSED organizes both strategy and evaluation around the capability users retain across repeated use, non-use, transition, and termination.

\begin{table*}[t]
\centering
\small
\begin{tabularx}{\textwidth}{@{}>{\raggedright\arraybackslash}p{0.15\textwidth}XXX@{}}
\toprule
\textbf{Component} & \textbf{Empathetic dialogue} & \textbf{Emotional support conversation} & \textbf{CSED} \\
\midrule
Unit of inquiry & A response or dialogue in which the system recognizes and responds to emotional experience & A support conversation in which strategies address the seeker's current needs & The full longitudinal interaction, including repeated sessions, non-use, re-engagement, transition, and termination \\
Success criterion & Empathic attunement and response appropriateness & Support effectiveness, helpfulness, and relief during the conversation & Effective support together with preserved or strengthened user capability across the lifecycle \\
Model of user change & Emotional state is recognized and addressed in the interaction & Needs and distress change as support strategies are selected and sequenced & Transient emotion and latent capability change through dialogue, stressor exposure, offline action, and human support \\
Design commitments & Emotional recognition, understanding, and empathic response & Exploration, comforting, and action matched to current needs & Empathic reception, support effectiveness, resilience activation, autonomy preservation, social connectedness maintenance, and transition and termination safety \\
Evaluation horizon & Commonly response or dialogue level, with scope varying by study & Commonly conversation level, with scope varying by study & Response, conversation, longitudinal, and termination evidence reported separately \\
Lifecycle boundary & Usually the benchmark response or dialogue & Usually the support conversation & Entry, continued use, non-use, system change, support transfer, and termination \\
\bottomrule
\end{tabularx}
\caption{The three paradigms described through one theoretical specification. Temporal scope can vary within the first two paradigms. CSED makes sustained capability and the full interaction lifecycle constitutive parts of the research object.}
\label{tab:paradigms}
\end{table*}

The paper advances three kinds of claims. Definitional claims specify CSED and its constructs. Structural claims explain how common benchmark objectives relate to CSED and why short-horizon observations do not identify longitudinal capability outcomes. Empirical hypotheses state expected differences among future policies and lifecycle interventions. Definitions and structural propositions can be evaluated for coherence and derivation. The hypotheses require longitudinal data and causal study designs.

\subsection{Formal Primitives and Restrictions}

Table~\ref{tab:notation} consolidates the formal objects introduced in the main paper. A longitudinal record $\mathcal{C}$ contains the user, policy, sessions, and an optional termination event. The latent state $s_k$ separates transient emotion $e_k$ from the capability vector $c_k$. Dialogue is one input to the transition kernel $\Phi$. Stressors, offline actions, and human relationships also affect the transition. The measurement map $M$ connects the latent state to validated instruments and behavioral markers.

\begin{table*}[t]
\centering
\small
\begin{tabularx}{\textwidth}{@{}>{\raggedright\arraybackslash}p{0.17\textwidth}>{\raggedright\arraybackslash}p{0.24\textwidth}X@{}}
\toprule
\textbf{Object} & \textbf{Notation} & \textbf{Role} \\
\midrule
Longitudinal interaction & $\mathcal{C}=(u,\pi,\mathcal{D},\tau)$ & Primary unit of design and evaluation \\
Session sequence & $\mathcal{D}=(d_1,\ldots,d_K)$ & Repeated conversations at calendar times $t_1<\cdots<t_K$ \\
User state & $s_k=(e_k,c_k)$ & Transient emotion and latent capability before the next transition \\
Capability vector & $c_k=(c_k^{\mathrm{reg}},c_k^{\mathrm{cop}},c_k^{\mathrm{aut}},c_k^{\mathrm{soc}})$ & Regulation, coping, autonomy, and social connectedness \\
Transition kernel & $s_{k+1}=\Phi(s_k,d_k,\varepsilon_k)$ & Change induced jointly by dialogue and exogenous stressors \\
Measurement model & $m_k=M(s_k)+\eta_k$ & Noisy proxies from instruments and behavioral markers \\
Termination event & $\tau=(k_\tau,\nu,\xi)$ & Termination index, forewarning lead, and transition type \\
Evaluation vector & $(J_{\mathrm{resp}},J_{\mathrm{conv}},J_{\mathrm{long}},J_{\mathrm{term}})$ & Evidence at four distinct horizons \\
Lifecycle constraints & $(\mathrm{Dep},\mathrm{Aut},\mathrm{Soc})$ & Reliance exposure, self-endorsed control, and connectedness drift \\
\bottomrule
\end{tabularx}
\caption{Notation for the illustrative CSED process model.}
\label{tab:notation}
\end{table*}

\begin{assumption}[Measurability and temporal alignment]
For each capability component, validated instruments or behavioral markers provide noisy observations at time points that can be aligned with dialogue exposure, stressor exposure, non-use, and offline behavior. The measurement process records sufficient timing information to distinguish within-session change from change that persists beyond system use.
\end{assumption}

The formulation has five restrictions. It targets sustained-use settings rather than isolated exchanges. Capability is latent and requires construct-valid proxies. Dialogue exposure is not the only cause of change. The thresholds $\delta$, $\alpha_0$, and $\sigma_0$ require contextual calibration with affected users and domain experts. Causal claims require a longitudinal design that addresses time-varying confounding, reciprocal effects, and selective attrition. These restrictions determine which claims the framework can support.

Autonomy follows self-determination theory and agentic accounts of collaborative control \citep{RyanDeci06,Calvo20,Bandura01,Horvitz99}. A user can autonomously adopt system guidance when the uptake is voluntary, the user's endorsed goals direct the collaboration, and the user can appraise, revise, or reject the output. Reliance and autonomy are therefore separate constructs. The proxy $\mathrm{Aut}=\mathbb{E}_i[V_i\mathrm{Dir}_i\mathrm{Eval}_i]$ measures retained volition, direction, and evaluation rather than whether the user decided alone.

\subsection{Structural Propositions}

Let the combined CSED objective be
\begin{equation}
J_{\mathrm{CSED}}(\pi)=\sum_{\ell\in\{\mathrm{resp},\mathrm{conv},\mathrm{long},\mathrm{term}\}}w_\ell J_\ell(\pi),
\label{eq:supp-objective}
\end{equation}
subject to $\mathrm{Dep}\leq\delta$, $\mathrm{Aut}\geq\alpha_0$, and $\mathrm{Soc}\geq\sigma_0$. The weights are nonnegative. This objective is an operational instantiation of the paradigm rather than its only possible realization.

\begin{proposition}[Benchmark nesting]
Response-scored empathetic dialogue and session-scored ESC objectives are restricted cases of Equation~\eqref{eq:supp-objective}.
\end{proposition}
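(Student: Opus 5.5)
The argument is by explicit parameter specialization, following the pattern of Remark~\ref{rem:reduction}. For each of the two benchmark families I exhibit a choice of the free elements of Equation~\eqref{eq:supp-objective}: the record length $K$, the session lengths $n_k$, the weight vector $w=(w_{\mathrm{resp}},w_{\mathrm{conv}},w_{\mathrm{long}},w_{\mathrm{term}})$, the construct weights $w_\rho$, the aggregator $\psi$, and the constraint thresholds. I then check that under this choice $J_{\mathrm{CSED}}(\pi)$ coincides, as a functional of $\pi$, with the benchmark objective. Being a ``restricted case'' will mean this pointwise equality of objectives on the restricted class of records, together with an empty or vacuous feasible-set restriction, so that the maximizers coincide as well.

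\textbf{Response-scored empathetic dialogue.} Take $K=1$ and $n_1=1$, so $\mathcal{C}$ contains a single exchange $(x,y)$ with history $h$ and no termination event. Set $w=(1,0,0,0)$. Restrict $\rho$ to its empathic-attunement coordinate by putting $w_\rho=e_1\in\Delta^4$, a vertex of the simplex and hence admissible. Equation~\eqref{eq:jresp} then reduces to $\mathbb{E}_{h,x}\mathbb{E}_{y\sim\pi}[\rho_1(y\mid h)]$, which is the expected empathy score of a response-level benchmark \citep{Ras19,Ma20}.

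For the constraints, I make them vacuous rather than ``dropping'' them informally. Choose $\delta=1$, which is valid because $\mathrm{Dep}\in[0,1]$ as a mean of ratios $N^\pi_k/N^{\mathrm{tot}}_k$. Choose $\alpha_0=0$, which is valid because each $\mathrm{PA}_i\in\{0,1\}$. Choose $\sigma_0=-\infty$ (or any lower bound on $c^{\mathrm{soc}}_K-c^{\mathrm{soc}}_0$). Equivalently, set the multipliers $\lambda_i=0$ in Equation~\eqref{eq:lagrangian}. With $K=1$, the $J_{\mathrm{long}}$ and $J_{\mathrm{term}}$ terms carry zero weight, so it does not matter that $R$ involves $c_{k+1}$ or that $\tau$ is absent.

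\textbf{Session-scored ESC.} Keep $K=1$ but let $n_1$ be arbitrary. Take $w=(w_r,w_c,0,0)$ with $\psi=-(\text{emotional intensity})$. Equation~\eqref{eq:jconv} then becomes the pre-to-post decrease in intensity for the single session, which is the within-conversation ESC criterion \citep{Liu21}. Setting $w_r>0$ optionally adds turn-level helpfulness or strategy scores, and $w_r=0$ gives the pure relief benchmark. The constraints are made vacuous exactly as before.

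\textbf{Main obstacle.} The delicate step is showing that $\psi=-(\text{intensity})$ is an admissible instance of the measurement framework. Since $\psi$ acts on proxies $m_k=M(s_k)+\eta_k$, I need intensity to be a component of $M$ applied to the transient coordinate $e_k$, not to $c_k$. I would argue this directly from Assumption~\ref{as:measure}, which allows $M$ to map the full state $s_k=(e_k,c_k)$. The conversation term therefore legitimately reads an emotion proxy. A further subtlety is that benchmark scores are usually normalized sums rather than expectations. I would handle this by noting that positive affine rescaling does not change argmaxes, so the nesting holds both up to this equivalence and at the level of optimal policies.
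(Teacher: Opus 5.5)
Your proposal is correct and uses the same specializations as the paper's proof. For the response benchmark these are $K=1$, $n_1=1$, $w=(1,0,0,0)$ with $\rho$ restricted to empathy, and for ESC they are $K=1$, $w_{\mathrm{long}}=w_{\mathrm{term}}=0$, $\psi=-(\text{intensity})$. Your vacuous thresholds $\delta=1$, $\alpha_0=0$, $\sigma_0=-\infty$ (equivalently $\lambda=0$) turn the paper's ``remove the lifecycle constraints'' step into a genuine parameter restriction, and your remark on the admissibility of $\psi$ addresses a point the paper only asserts.
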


\begin{proof}
Set $K=1$ and $n_1=1$. Choose $w_{\mathrm{resp}}=1$ and set all other horizon weights to zero. Remove the lifecycle constraints and restrict the response feature map $\rho$ to the empathy dimensions used by a response-scored benchmark. Equation~\eqref{eq:supp-objective} then reduces to its expected empathy score. For a session-scored ESC objective, retain $K=1$, allow $n_1\geq1$, set $w_{\mathrm{long}}=w_{\mathrm{term}}=0$, and retain response and conversation terms. Choosing $\psi$ as negative emotional intensity recovers within-conversation emotional change. Both objectives follow by parameter restriction and removal of longitudinal constraints.
\end{proof}

The proposition places common benchmark objectives inside one larger evaluation program. It preserves the strategies and evidence already used by empathetic dialogue and ESC. It also identifies the additional observations required when a system remains available across repeated use.

\begin{proposition}[Short-horizon non-identifiability]
Equality of response and conversation scores does not imply equality of longitudinal capability outcomes.
\end{proposition}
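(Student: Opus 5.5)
The plan is to argue by explicit counterexample. The statement is a non-implication, so a single instance of the process model suffices: two policies $\pi_A,\pi_B$ and a user $u$ with $J_{\mathrm{resp}}(\pi_A)=J_{\mathrm{resp}}(\pi_B)$ and $J_{\mathrm{conv}}(\pi_A)=J_{\mathrm{conv}}(\pi_B)$, but $J_{\mathrm{long}}(\pi_A)\neq J_{\mathrm{long}}(\pi_B)$ (and, for good measure, different $R$, $\mathrm{Dep}$, or $\mathrm{Soc}$). The structural reason is that $J_{\mathrm{resp}}$ depends only on the per-turn response distribution through $\rho$, and $J_{\mathrm{conv}}$ depends only on the within-session measurements $m_k^{\mathrm{pre}},m_k^{\mathrm{post}}$. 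By contrast, $J_{\mathrm{long}}$ depends on the transition kernel $\Phi$ acting on the capability component $c_k$ between sessions. Since $\Phi$ is unknown and only constrained to take $(s_k,d_k,\varepsilon_k)$ as inputs, nothing in the short-horizon scores pins down how $d_k$ moves $c_{k+1}$.

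\textbf{Step 1: fix the response and session outcomes.} I will choose $K\geq 2$ and deterministic stressors $\varepsilon_k$. I will let $\pi_A$ and $\pi_B$ produce sessions $d_k^A,d_k^B$ that are scored identically by $\rho$. The simplest choice is to let both policies emit the same response features, so $\rho(y^A\mid h)=\rho(y^B\mid h)$ pointwise, while differing in one feature invisible to $\rho$ and $\psi$. For example, $\pi_A$ ends sessions with a coping-practice prompt and $\pi_B$ with an equally rated reassurance. I will take the within-session effect on $e_k$, and hence on the agency and coping proxies aggregated by $\psi$, to be identical for both policies, so that $J_{\mathrm{conv}}$ coincides.

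\textbf{Step 2: let the transition kernel separate them.} Within the freedom of Equation~\eqref{eq:dynamics}, I will define
\[
c_{k+1}=c_k+\gamma\,\mathbf{1}[d_k\text{ contains the practice feature}]\,\mathbf{e}-\gamma'\,\mathbf{1}[d_k\text{ is reassurance-only}]\,\mathbf{e}
\]
with $\gamma,\gamma'>0$ and $\mathbf{e}$ a positive direction. I will also choose a strictly increasing index $g$. Then $g(c_k)$ increases under $\pi_A$ and decreases under $\pi_B$, so the average term in Equation~\eqref{eq:jlong} differs. Because $\widehat{\mathbb{E}}[g(c_{k+1})\mid c_k,\varepsilon_k]$ is policy-independent, the residual $R$ also differs in sign, and $J_{\mathrm{long}}$ differs for any $\beta>0$. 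Optionally, the same construction with $c^{\mathrm{soc}}$ gives different $\mathrm{Soc}$ drift, which ties the counterexample to the comfort trap described after Equation~\eqref{eq:dynamics}.

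\textbf{Main obstacle.} The delicate point is consistency of the construction. I must ensure the distinguishing feature of $d_k$ genuinely leaves $\rho$ and $\psi(m_k^{\mathrm{post}})-\psi(m_k^{\mathrm{pre}})$ unchanged while still being an input to $\Phi$. I will handle this by making $M$ depend on $c_k$ only through components outside $\psi$'s argument within a session. Equivalently, I will let capability change materialize only at the between-session transition, after $m_k^{\mathrm{post}}$ is recorded. This is legitimate under Assumption~\ref{as:measure}, because capability effects are realized offline. With that timing convention, the short-horizon scores are functions of a sigma-algebra that does not contain the information driving $c_{k+1}$, which is exactly the non-identifiability claimed.
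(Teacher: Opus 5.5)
This is essentially the paper's proof: two policies with identical short-horizon observables (hence equal $J_{\mathrm{resp}}$ and $J_{\mathrm{conv}}$) whose between-session transitions push $c_k$ in opposite directions along a fixed vector. Any index $g$ increasing in that direction separates them, and your routing of the difference through an unscored feature of $d_k$ under a single kernel $\Phi$ just makes explicit what the paper calls ``unobserved capability transitions''. One slip: policy-independence of $\widehat{\mathbb{E}}$ does not by itself give $R$ opposite signs, because from the second transition on the conditioning states $c_k$ differ across the two policies, so to claim $J_{\mathrm{long}}$ differs for every $\beta>0$ you should fix the norm, e.g.\ $\widehat{\mathbb{E}}[g(c_{k+1})\mid c_k,\varepsilon_k]=g(c_k)$, or else conclude, as the paper does, from the divergence of $g(c_k)$ alone.
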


\begin{proof}
Consider policies $\pi_a$ and $\pi_b$ that induce the same distribution over observed histories, responses, and within-session proxy changes. They therefore have equal $J_{\mathrm{resp}}$ and $J_{\mathrm{conv}}$. Let their unobserved capability transitions differ after the session. For a nonzero vector $a$, define $c_{k+1}=c_k+a$ under $\pi_a$ and $c_{k+1}=c_k-a$ under $\pi_b$, while holding the short-horizon observables fixed. For any capability index $g$ that increases in the direction of $a$, the expected longitudinal indices differ. The short-horizon score distribution is compatible with both transitions, so it cannot identify which longitudinal outcome occurred.
\end{proof}

This is a structural result about the evidence available to an evaluator. It does not assert that a particular deployed policy produces either transition. It shows why response preference and conversation relief need observations of capability, reliance, autonomy, and connectedness across later time points. The four hypotheses in the main paper specify empirical comparisons that can estimate these differences.

\section{Targeted Literature Audit}

\subsection{Search, Screening, and Sampling}

The literature arm is a PRISMA-ScR-guided targeted scoping audit \citep{Tri18}. Records were retrieved from the arXiv API on July 8, 2026. The search used seven exact phrases. They were ``emotional support conversation,'' ``emotional support dialog,'' ``empathetic dialogue,'' ``empathetic response generation,'' ``mental health chatbot,'' ``AI companion,'' and ``companion chatbot.'' The script retrieved the first 200 records for each query in reverse submission-date order, deduplicated records by version-free arXiv identifier, and retained records dated from 2019 through the retrieval date in 2026.

The search produced 275 unique records. Title and abstract screening retained 228 and excluded 47. Four exclusions concerned emotion recognition without dialogue generation, four concerned speech, synthesis, or embodiment without the dialogue-system focus, and 39 concerned an out-of-scope application. The inclusion set covers emotional or empathetic dialogue, emotional-support dialogue, mental-health chatbots, and AI companions, together with datasets, benchmarks, user studies, reviews, and position papers about these systems.

The pilot targeted a proportional year-stratified sample of 90 included records. Integer rounding within the annual strata produced 91 records. The sampling seed was 20260708. Coding used titles and abstracts. Sixty coded records build or evaluate systems. The remaining records include user studies, reviews, datasets, evaluations, and position papers that characterize risks or the research landscape. Claims about system objectives use the 60-record subset. Claims about evaluation horizon use all 91 records.

\subsection{Paper-Level Codebook}

The paper-level codebook uses six dimensions. D1 records the primary objective as relief, capability, both, or neutral. D2 records named mechanisms, including validation, exploration, reappraisal, problem solving, self-efficacy, and social connection. D3 records the highest evaluation outcome as interaction quality, proximal state change, capability outcome, or none. D4 records the longest evaluation horizon as one turn, one session, multiple sessions, longitudinal, or none. D5 records dependency, autonomy, sycophancy, crisis safety, termination loss, or no named risk. D6 records the artifact type.

Mechanism definitions were anchored in psychology rather than derived from CSED. Reappraisal follows emotion-regulation research. Problem solving refers to concrete plans and action decomposition. Self-efficacy requires support for users' perceived and exercised competence. Social connection requires behavior that supports contact with other people or services. This independent grounding reduces circularity between the proposed paradigm and the audit categories.

\begin{table}[t]
\centering
\small
\begin{tabular}{@{}lrr@{}}
\toprule
\textbf{Dimension and label} & \textbf{Count} & \textbf{Denominator} \\
\midrule
D1 relief & 57 & 60 system papers \\
D1 both & 2 & 60 system papers \\
D1 capability & 1 & 60 system papers \\
\midrule
D2 validation or comfort & 59 & 60 system papers \\
D2 exploration & 20 & 60 system papers \\
D2 problem solving & 3 & 60 system papers \\
D2 self-efficacy & 2 & 60 system papers \\
D2 reappraisal & 0 & 60 system papers \\
D2 social connection & 0 & 60 system papers \\
\midrule
D3 interaction quality & 57 & 91 records \\
D3 proximal state & 6 & 91 records \\
D3 capability outcome & 0 & 91 records \\
D3 no evaluation & 28 & 91 records \\
\midrule
D4 one turn & 38 & 91 records \\
D4 one session & 26 & 91 records \\
D4 multiple sessions & 21 & 91 records \\
D4 longitudinal & 0 & 91 records \\
D4 no evaluation & 6 & 91 records \\
\bottomrule
\end{tabular}
\caption{Primary audit counts used in the main paper. Mechanisms are multi-label.}
\label{tab:litcounts}
\end{table}

Table~\ref{tab:litcounts} reports the evidence behind the strategic and longitudinal gap. Fifty-seven of 60 system-building papers have relief as their primary objective, giving 95.0\%. Two combine relief and capability. One targets the capability of peer counselors. No coded record evaluates a user-capability outcome or uses a longitudinal evaluation horizon. Risk coding across all 91 records finds 15 references to dependency, 14 to crisis safety, two to termination loss, one to sycophancy, and one to autonomy. Sixty-six records name none of these risks. Only one of the 60 system-building papers names a risk from this set.

The evaluation and mechanism codes also show a measurement concentration. Of the 59 records that name validation or comfort, 54 use interaction-quality outcomes, three use proximal state outcomes, and two contain no evaluation. Of the three records that name problem solving, two use interaction quality and one uses a proximal outcome. Both self-efficacy records use interaction-quality evaluation. Reappraisal and social connection have no mechanism-by-outcome cells because neither is named in the system-building subset.

The audit is descriptive. The year-stratified sample, arXiv-only source, title and abstract coding, and AI-only coding limit population inference. The reported proportions characterize the coded sample. The released protocol defines full-text, multi-database, and human-recoding extensions.

\section{ESConv Corpus Analysis}

\subsection{Sampling and Function Codebook}

ESConv contains 1,300 dialogues and 18,376 supporter utterances with strategy annotations \citep{Liu21}. The full-corpus strategy counts include 3,801 questions, 3,341 other turns, 2,954 suggestions, 2,827 affirmation and reassurance turns, 1,713 self-disclosures, 1,436 reflections of feeling, 1,215 information turns, and 1,089 restatements or paraphrases. Affirmation and reassurance therefore accounts for 15.4\% of annotated supporter turns. Suggestions account for 16.1\%.

Function coding used a stratified random sample of 300 supporter utterances. The early, middle, and late dialogue-position terciles each contribute 100 turns. The sampling seed was 20260708. Each item includes the supporter turn and up to two preceding seeker utterances. The primary code is the dominant communicative function of the main clause.

The ten functions are F1 validation and comfort, F2 exploration, F3 reappraisal, F4 problem solving, F5 self-efficacy, F6 social connection, F7 boundary and safety, F8 self-disclosure, F9 information, and F10 other behavior. The analysis maps F1 to relief. It maps F3 through F7 to capability-relevant behavior. It maps F2 and F8 through F10 to interaction process. The mapping is applied after function coding.

Mixed-function decisions follow four rules. An empathic opener does not override a capability-relevant main clause. A question that embeds advice is coded as problem solving. Strength-based reassurance is coded as self-efficacy. A directive about a value-laden personal decision is not treated as capability-supporting problem solving unless it preserves user direction. These rules separate supportive tone from the function of the turn.

\begin{table}[t]
\centering
\small
\begin{tabular}{@{}llrr@{}}
\toprule
\textbf{Code} & \textbf{Function} & \textbf{Count} & \textbf{Percent} \\
\midrule
F1 & Validation and comfort & 46 & 15.3 \\
F2 & Exploration & 60 & 20.0 \\
F3 & Reappraisal & 12 & 4.0 \\
F4 & Problem solving & 66 & 22.0 \\
F5 & Self-efficacy & 20 & 6.7 \\
F6 & Social connection & 30 & 10.0 \\
F7 & Boundary and safety & 1 & 0.3 \\
F8 & Self-disclosure & 18 & 6.0 \\
F9 & Information & 5 & 1.7 \\
F10 & Other & 42 & 14.0 \\
\bottomrule
\end{tabular}
\caption{Function distribution in the 300-turn ESConv sample. Percentages use 300 as the denominator.}
\label{tab:functions}
\end{table}

Table~\ref{tab:functions} contains the exact counts behind the main-paper percentages. Capability-relevant functions total 129 of 300 turns, giving 43.0\%. Relief accounts for 46 turns, giving 15.3\%. Process functions account for 125 turns, giving 41.7\%. Capability relevance appears in 26 early turns, 55 middle turns, and 48 late turns. Relief appears in 17, 15, and 14 turns. Process functions appear in 57, 30, and 38 turns. These stage distributions describe the sample and do not estimate longitudinal user outcomes.

\subsection{Reliability Design}

Reliability was assessed on one random subset of 40 sampled turns. Four label sets were compared. They consist of the primary AI coding, a blind repeat by the same model, a GPT-5.5 coding, and a DeepSeek-V4-Pro coding. Each coder received the same function definitions and local seeker context without access to the other labels. Compatible endpoints used temperature zero. Reasoning endpoints that did not accept that parameter used their provider defaults. Frozen labels are included because hosted model behavior and undisclosed serving infrastructure can change.

For coders $a$ and $b$, Cohen's kappa is
\begin{equation}
\kappa(a,b)=\frac{p_o-p_e}{1-p_e},
\end{equation}
where $p_o$ is observed agreement and $p_e$ is agreement expected from the two marginal label distributions. Fine-grained kappa uses F1 through F10. Paradigm-level kappa maps the same labels to relief, capability, and process before computing agreement.

\begin{table}[t]
\centering
\small
\begin{tabular}{@{}lrrrr@{}}
\toprule
\textbf{Coder pair} & \textbf{$\kappa_F$} & \textbf{$p_{o,F}$} & \textbf{$\kappa_P$} & \textbf{$p_{o,P}$} \\
\midrule
Primary, repeat & .660 & .725 & .695 & .800 \\
Primary, GPT-5.5 & .751 & .800 & .733 & .825 \\
Primary, DeepSeek & .547 & .625 & .655 & .775 \\
Repeat, GPT-5.5 & .696 & .750 & .845 & .900 \\
Repeat, DeepSeek & .553 & .625 & .646 & .775 \\
GPT-5.5, DeepSeek & .577 & .650 & .724 & .825 \\
\midrule
Mean kappa & .631 & & .716 & \\
\bottomrule
\end{tabular}
\caption{All six pairwise reliability comparisons on the shared 40-item subset. Subscript $F$ denotes ten functions. Subscript $P$ denotes the three analysis categories.}
\label{tab:reliability}
\end{table}

Fine-grained kappa ranges from 0.547 to 0.751, with a mean of 0.631. Paradigm-level kappa ranges from 0.646 to 0.845, with a mean of 0.716. Across the six pairs, there are 57 fine-grained disagreements. Thirty-one remain within one analysis category and 26 cross the relief, capability, or process boundary. Common within-category boundaries include validation versus self-efficacy and exploration versus self-disclosure. The result supports the coarse strategic contrast more strongly than individual function distinctions.

The reliability evidence concerns consistency among AI coders. It does not replace human construct validation. A full study should add independent human coding, preregistered adjudication, and a larger reliability sample.

\section{Claim and Artifact Traceability}

\subsection{Claim-Evidence Map}

Table~\ref{tab:traceability} maps every headline quantity to a frozen result artifact. The artifact manifest supplies the exact relative path and checksum for each short source label.

\begin{table*}[t]
\centering
\footnotesize
\begin{tabularx}{\textwidth}{@{}>{\raggedright\arraybackslash}p{0.27\textwidth}>{\raggedright\arraybackslash}p{0.12\textwidth}>{\raggedright\arraybackslash}p{0.21\textwidth}X@{}}
\toprule
\textbf{Main-paper claim} & \textbf{Calculation} & \textbf{Source artifact} & \textbf{Scope} \\
\midrule
95.0\% of system papers pursue relief & $57/60$ & Literature results, D1 & Coded system-building subset \\
No capability outcome or longitudinal evaluation & $0/91$ for both codes & Literature results, D3 and D4 & All coded literature records \\
One system paper names a focal risk & $1/60$ & Literature results, risk summary & Coded system-building subset \\
Capability-relevant ESConv functions reach 43.0\% & $129/300$ & Function results, F3 through F7 & Stratified turn sample \\
Generic problem solving reaches 22.0\% & $66/300$ & Function results, F4 & Stratified turn sample \\
Reappraisal reaches 4.0\% & $12/300$ & Function results, F3 & Stratified turn sample \\
Self-efficacy reaches 6.7\% & $20/300$ & Function results, F5 & Stratified turn sample \\
Boundary behavior reaches 0.3\% & $1/300$ & Function results, F7 & Stratified turn sample \\
Fine-grained reliability ranges from .547 to .751 & Six pairwise kappas & Reliability results & Shared 40-item subset \\
Paradigm reliability ranges from .646 to .845 & Six mapped kappas & Frozen label files and reproduction script & Shared 40-item subset \\
\bottomrule
\end{tabularx}
\caption{Traceability from headline claims to calculations and supplied artifacts.}
\label{tab:traceability}
\end{table*}

\subsection{Model-Behavior Probe}

The model-behavior arm is a released protocol rather than a completed result. Its frozen input contains 100 held-out ESConv contexts sampled with seed 20260708. The generation pipeline presents the same dialogue context to at least two chat models, stores the response and model identifier, and applies the F1 through F10 codebook. Planned comparisons report the gold and generated function distributions, stage-conditional differences, and reliability of the generated-response coding. Model versions, access date, endpoint, decoding parameters, and failures must be recorded before the probe supports any behavioral claim.

No deployed-model result from this arm is included in the audit percentages. This separation preserves the distinction among completed corpus analysis, the theoretical framework, and future tests of model behavior.

\end{document}